\pdfoutput=1

\documentclass[11pt]{article}

\usepackage{eacl2023}

\usepackage{xfrac}
\usepackage{amsmath}

\usepackage{hhline}
 
\usepackage{mathtools}
\mathtoolsset{showonlyrefs,showmanualtags}

\usepackage{times}
\usepackage{latexsym}
\usepackage{amsfonts,amsmath,amsthm}
\usepackage{graphicx}
\usepackage{subfig}
\usepackage{array}
\usepackage{ifthen}
\newcolumntype{C}[1]{>{\centering\arraybackslash}m{#1}}

\usepackage[T1]{fontenc}
\usepackage{bm}

\usepackage{relsize}

\definecolor{mygreenua}{HTML}{F1F5EB}
\definecolor{myredda}{HTML}{FFE6E6}

\newcommand{\uahelper}[1]{\colorbox{myredda}{\smaller$\uparrow$#1}}
\newcommand{\dahelper}[1]{\colorbox{mygreenua}{\smaller$\downarrow$#1}}

\newcommand{\uaghelper}[1]{\colorbox{mygreenua}{\smaller$\uparrow$#1}}
\newcommand{\dabhelper}[1]{\colorbox{myredda}{\smaller$\downarrow$#1}}

\newcommand{\ua}[1]{\ifthenelse{\equal{#1}{0.00} \or \equal{#1}{0.000}}{}{\uahelper{#1}}}
\newcommand{\da}[1]{\ifthenelse{\equal{#1}{0.00} \or \equal{#1}{0.000}}{}{\dahelper{#1}}}
\newcommand{\uag}[1]{\ifthenelse{\equal{#1}{0.00} \or \equal{#1}{0.000}}{}{\uaghelper{#1}}}
\newcommand{\dab}[1]{\ifthenelse{\equal{#1}{0.00} \or \equal{#1}{0.000}}{}{\dabhelper{#1}}}

\usepackage[utf8]{inputenc}

\usepackage{microtype}
\usepackage{mathtools}

\newcommand{\rv}[1]{\mathbf{#1}}
\newcommand{\myvec}[1]{\mathbf{#1}}
\newcommand{\mat}[1]{\bm{#1}}
\newcommand{\mymat}[1]{\bm{#1}}
\newcommand{\elementrv}[1]{\textnormal{#1}}

\newcommand{\myfunc}[1]{#1}

\newcommand{\shaycomment}[1]{\textcolor{blue}{#1}}

\newcommand{\shuncomment}[1]{\textcolor{orange}{#1}}

\newcommand{\ignore}[1]{}

\newcommand{\myhalf}{\sfrac{1}{2}}
\newtheorem{lem}{Lemma}

%
%

\title{Gold Doesn't Always Glitter: Spectral Removal of \\ Linear and Nonlinear Guarded Attribute Information}




\author{Shun Shao $\qquad$ Yftah Ziser $\qquad$ Shay B. Cohen \\
Institute for Language, Cognition and Computation \\
School of Informatics, University of Edinburgh \\
10 Crichton Street, Edinburgh, EH8 9AB \\
\texttt{s.shao-11@sms.ed.ac.uk} \quad
\texttt{yftah.ziser@ed.ac.uk} \\
\texttt{scohen@inf.ed.ac.uk}}

\begin{document}
\setlength{\fboxsep}{1pt}
\maketitle
\begin{abstract}
We describe a simple and effective method (Spectral Attribute removaL; SAL) to remove private or guarded information from neural representations. Our method uses matrix decomposition to project the input representations into directions with \emph{reduced} covariance with the guarded information rather than \emph{maximal} covariance as factorization methods normally use. We begin with linear information removal and proceed to generalize our algorithm to the case of nonlinear information removal using kernels. Our experiments demonstrate that our algorithm retains better main task performance after removing the guarded information compared to previous work. In addition, our experiments demonstrate that we need a relatively small amount of guarded attribute data to remove information about these attributes, which lowers the exposure to sensitive data and is more suitable for low-resource scenarios.\footnote{Code is available at \url{https://github.com/jasonshaoshun/SAL}.}

\ignore{
Some advantages
1. It very easy to use: vanilla SVD only need a binary label. It can be used in any library supporting calculating SVD.
2. The vanilla SVD is much faster than INLP, it only needs to calculate the singular vectors of the covariance matrix of a few biased examples, e.g. it took only 0.31 seconds to debiase on 74882 BERT representation in the dimension of 768, while INLP need at least train 17 minutes for training.
3. The vanilla SVD needs fewer biased examples to reach the same performance by Null it out, e.g. it only need about 30 most gendered words to debiase on the dictionary while null it out need at least 600 gendered words by our experiment, and 7500 words has been used for their paper.

Some findings:
1. We found the model trained on less but more biased samples made the debiased embedding closer to the original embedding in KNN which means keeping more details of the original embedding while debiased.
4. The gendered space captured by linear classifiers in null it out paper must contain some important information of the original embedding which has been removed by null space projection result in poor performance on the main tasks. Vanilla SVD captured the gendered information more precisely the Null it Out, so kept the accuracy on the main tasks very well after the debiasing.

}

\end{abstract}


\section{Introduction}
Natural language processing (NLP) models currently play a critical role in  decision-supporting systems. Their predictions are often affected by undesirable biases encoded in real-world data they are trained on. Making sensitive predictions based on irrelevant input attributes such as gender, race, or religion (\textbf{protected} or \textbf{guarded} attributes) impacts user trust and the practical broad utility of NLP methods.

In recent years, representation learning approaches have become the mainstay of input encoding in NLP. While representation learning has yielded state-of-the-art results in many NLP tasks, controlling or inspecting the information encoded in these representations is hard. Thus, using rule-based methods to remove unwanted information from such representations is often not feasible. In the context of protected attributes, \newcite{bolukbasi2016man} showed that word embeddings trained on the Google News corpus encode gender stereotypes. Later, \newcite{manzini2019black} expanded this work and showed that word embeddings trained on the Reddit L2 corpus \cite{rabinovich2018native} encode race and religion biases.

\begin{figure}
    \centering
    \includegraphics[width=3 in]{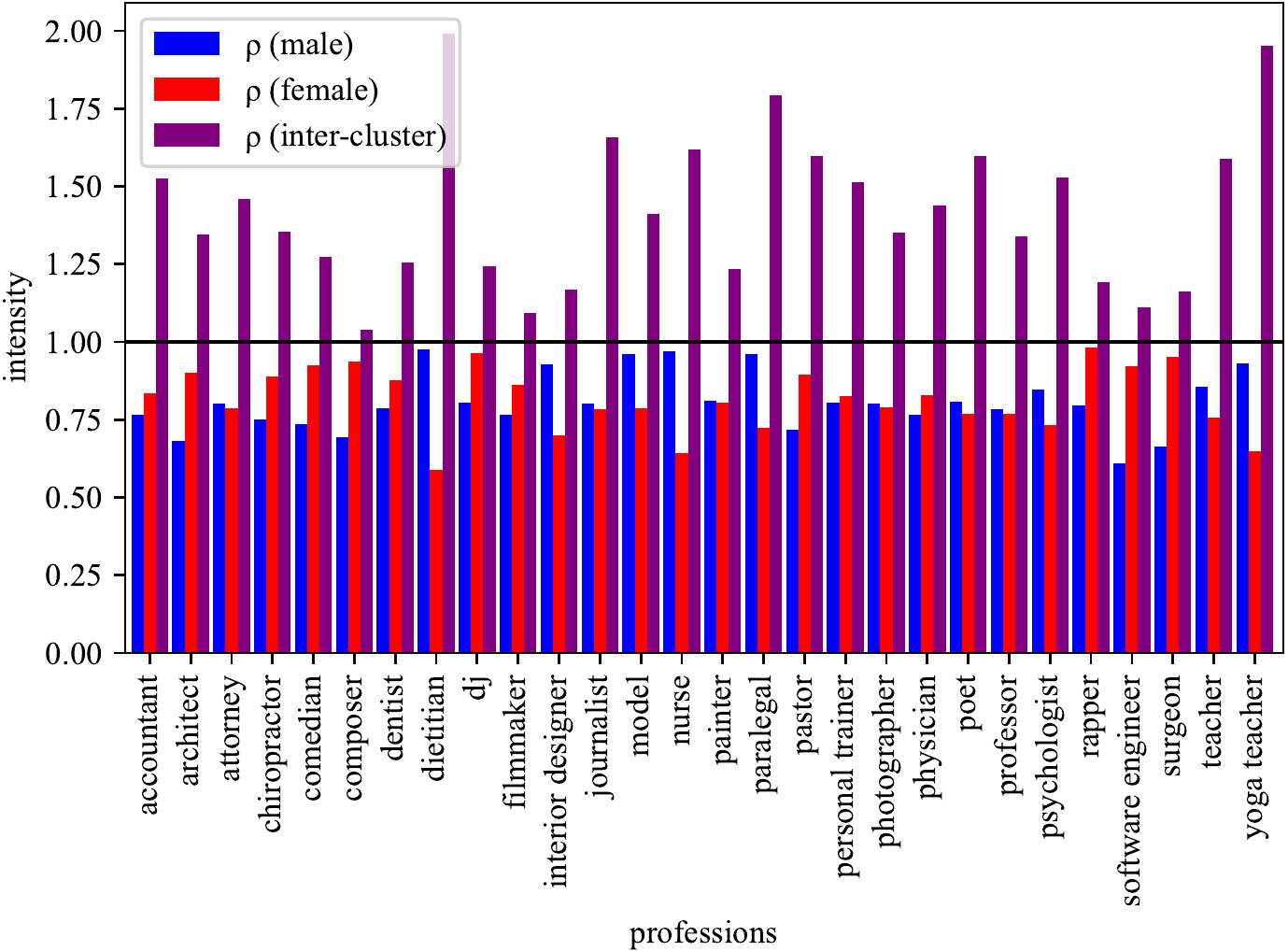}
    \caption{The ratio $\rho$ between the average t-SNE similarity of representations between two gender clusters $c_1,c_2$ ($\mathrm{sim}(c_1,c_2)$) for each profession: $\rho = \left(\textit{after SAL } \mathrm{sim}(c_1,c_2) \Big / \textit{before SAL }\mathrm{sim}(c_1,c_2)\right)$. Three values of $\rho$ are computed, intra-cluster: (1) $c_1=c_2=\text{male}$; (2) $c_1=c_2=\text{female}$; and inter-cluster: (3) $c_1=\text{male}, c_2=\text{female}$. The ratios in the inter-cluster case are smaller than $1$, and larger than $1$ for the intra-cluster case.}
    \label{fig:my_label}
\end{figure}

We propose a simple yet effective technique to remove protected attribute information from neural representations. Our method, dubbed \textbf{SAL} for Spectral Attribute removaL,
applies Singular Value Decomposition (SVD) on a covariance matrix between the input representation and the protected attributes and prunes highly co-varying directions. Figure~\ref{fig:my_label} demonstrates how professional biography text representations from labeled gender clusters (each biography is marked with the gender of its subject; \citealt{de2019bias}) for different professions  expand after the use of SAL, and become closer, implying a higher spread of each profession representations after SAL (\S\ref{section:biography}).


In addition, we overcome the \textbf{linear removal limitations} of SAL and previous work by using eigenvalue decomposition of \textbf{kernel matrices} to obtain projections into directions with reduced covariance in the kernel feature space. We refer to this method as \textbf{kSAL} (for kernel SAL). 
\ignore{
\begin{figure}[t]
\centering    
\subfloat{\includegraphics[width=0.16\textwidth]{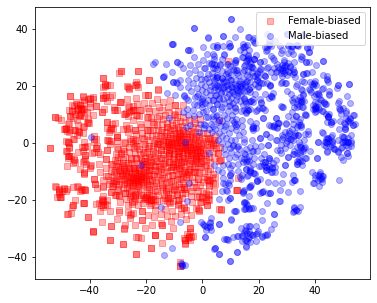}}
\subfloat{\includegraphics[width=0.16\textwidth]{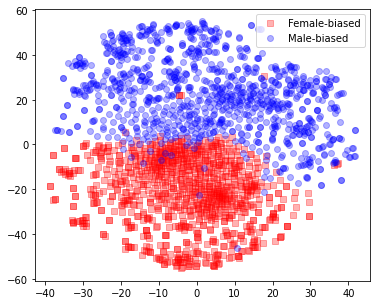}}
\subfloat{\includegraphics[width=0.16\textwidth]{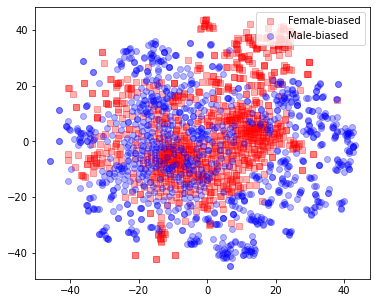}}
\caption{A t-SNE projection of GloVe word embeddings for $k=0,1,2$ (number of removed principal directions with SAL). The idea for the plot adapted from \newcite{ravfogel2020null}.}
\label{fig:pr}
\end{figure} 
}

SAL outperforms the recent method of \newcite{ravfogel2020null} aimed at solving the same problem and is able to remove guarded information much faster while retaining better performance for the main task. Further experiments demonstrate that our method performs well even when the available data for the protected attributes is limited.

\section{Problem Formulation and Notation}
\label{section:problem}

For an integer $n$ we denote by $[n]$ the set $\{1,\ldots,n\}$. For a vector $\myvec{v}$, we denote by $||\myvec{v}||_2$ its $\ell_2$ norm. Matrices and vectors are in boldface font (with uppercase or lowercase letters, respectively). Random variable vectors are also denoted by boldface uppercase letters. For a matrix $\mat{A}$, we denote by $\mat{A}_j$ its $j$th column (or by $\mat{A}_{i:j}$ the matrix with columns $\mat{A}_{k}$ for $k = i, \ldots, j$). Vectors are assumed to be column vectors.

In our problem formulation, we assume three random variables: $\rv{X} \in \mathbb{R}^d$, $\rv{Y} \in \mathbb{R}$ and $\rv{Z} \in \mathbb{R}^{d'}$.
Samples of $\rv{X}$ are the inputs for a classifier to predict corresponding samples of $\rv{Y}$. The random vector $\rv{Z}$ represents the guarded attributes. We want to maximize the ability to predict $\rv{Y}$ from $\rv{X}$, while minimizing the ability to predict $\rv{Z}$ from $\rv{X}$. Without loss of generality, we assume that the mean values of $\rv{X}$, $\rv{Y}$ and $\rv{Z}$ are $0$, and that $d' \le d$.\footnote{For example, $\rv{Z}$ may be a multi-class label such as gender represented as a short vector over $\{ -1, 1\}$ and $\rv{X}$ may be a complex input, which before removal of information about the guarded attribute $\rv{Z}$, can be used to predict $\rv{Z}$. An example of $\rv{X}$ would be an encoding of a post on a message board.}

We assume $n$ samples of $(\rv{X}, \rv{Y}, \rv{Z})$, denoted by $(\myvec{x}^{(i)}, \myvec{y}^{(i)}, \myvec{z}^{(i)})$ for $i \in [n]$.
These samples are used to train the classifier to predict the target values ($y$) from the inputs ($x$). These samples are also used to remove the information from the inputs based on the guarded attributes ($z$).

\section{Erasing Principal Directions}
\label{section:vanilla-svd}

We describe SAL in this section. We explain the use of SVD on cross-covariance matrices (\S\ref{section:vanilla-svd-1}) and describe the core algorithm in \S\ref{section:bb} and the connection to other algorithms in \S\ref{sec:conn}.

\subsection{SVD on Cross-covariance Matrix}
\label{section:vanilla-svd-1}

Let $\mat{A} = \mathbb{E}[\rv{X}\rv{Z}^{\top}]$, the matrix of cross-covariance between $\rv{X}$ and $\rv{Z}$. In that case, $\mat{A}_{ij} = \mathrm{Cov}(\elementrv{X}_i,\elementrv{Z}_j)$ for $i \in [d]$ and $j \in [d']$.

A simple observation is that for any two vectors $\myvec{a} \in \mathbb{R}^d, \myvec{b} \in \mathbb{R}^{d'}$, the following holds due to the linearity of expectation:
\begin{align}
\myvec{a} \mat{A} \myvec{b}^{\top} = \mathrm{Cov}(\myvec{a}^{\top} \rv{X}, \myvec{b}^{\top}\rv{Z}). \label{eq:A}
\end{align}

This motivates the use of the cross-covariance matrix to find the so-called principal directions: directions in which the projection of $\rv{X}$ and $\rv{Z}$ maximize their covariance, where the projections are represented as two matrices $\mat{U} \in \mathbb{R}^{d \times d}$ and $\mat{V} \in \mathbb{R}^{d' \times d'}$. Each column in these matrices plays the role of the vectors $\myvec{a}$ and $\myvec{b}$ in Eq.~\refeq{eq:A}. More specifically, we find $\mat{U}$ and $\mat{V}$ such that for any $i \in [d']$ it holds that:
\begin{align}
\mathrm{Cov}(\mat{U}_i^{\top} \rv{X}, \mat{V}_i^{\top} \rv{Z})  = \max_{(\myvec{a},\myvec{b}) \in \mathcal{O}_i} \mathrm{Cov}(\myvec{a}^{\top} \rv{X}, \myvec{b}^{\top}\rv{Z}) ,
\end{align}
\noindent where $\mathcal{O}_i$ is the set of pairs of vectors $(\myvec{a},\myvec{b})$ such that $||\myvec{a}||_2 = ||\myvec{b}||_2 = 1$, $\myvec{a}$ is orthogonal to $\mat{U}_1, \ldots, \mat{U}_{i-1}$ and similarly, $\myvec{b}$ is orthogonal to $\mat{V}_1, \ldots, \mat{V}_{i-1}$. 

It can be shown that such maximization can be done by applying the SVD on $\mat{A}$ such that
$\mat{A} = \mat{U}  \mat{\Sigma} \mat{V}^{\top}$,
where $\mat{U} \in \mathbb{R}^{d \times d}$, $\mat{\Sigma} \in \mathbb{R}^{d \times d'}$ and $\mat{V} \in \mathbb{R}^{d' \times d'}$.
In the case of SVD, $\mat{U}$ and $\mat{V}$ are orthonormal matrices, and $\mat{\Sigma}$ is a diagonal matrix with non-negative values on the diagonal. We let the vector of singular values on the diagonal of $\mat{\Sigma}$ be denoted by $\sigma_1, \ldots, \sigma_{d'}$. 

Once the orthogonal matrices in the form of $\mat{U}$ and $\mat{V}$ are found, one can truncate them (for example, use only a subset of the columns of $\mat{U}$, represented as the semi-orthonormal matrix $\hat{\mat{U}}$) to use, for example, $\hat{\mat{U}}^{\top}\rv{X}$, as a representation (linear projection) of $\rv{X}$ which co-varies the most with $\rv{Z}$.


We suggest that rather than using the largest singular value vectors in $\mat{U}$ to project $\rv{X}$, we should project $\rv{X}$ using the principal directions with the \textbf{smallest} singular values. This means we find a representative of $\rv{X}$ that co-varies the \emph{least} with $\rv{Z}$, essentially removing the information from $\rv{X}$ that is most related to $\rv{Z}$ and can be detected through covariance.

In addition, once such a projection matrix $\overline{\mymat{U}}$ is calculated, we can use the projection $\overline{\rv{X}} = \overline{\mymat{U}} \overline{\mymat{U}}^{\top} \rv{X}$ such that the value of $\mathbb{E}[||\rv{X} - \overline{\rv{X}}||_2]$ is minimized, while removing the information from $\rv{X}$.\footnote{This can be formalized using the min-max theorem of linear algebra, also referred to as the Courant–Fischer–Weyl min-max principle.} This allows us to potentially use the new projected values of the input random variable $\rv{X}$ \emph{without} changing a classifier that was originally trained on samples from $\rv{X}$, though as we see in \S\ref{section:exp}, using the projected input as-is without retraining the classifier may lead to performance issues with our method and other methods as well.



\ignore{
\paragraph{How is this related to Latent Semantic Analysis?} In LSA, we perform SVD on a word-document co-occurrence matrix, aiming to project either one type of vectors into a space that corresponds to the largest singular vectors.
}


\subsection{The SAL Algorithm}
\label{section:bb}
Our algorithm (SAL) follows the following procedure. First, the empirical cross-covariance matrix, estimating $\mathbb{E}[\rv{X}\rv{Z}^{\top}]$ is calculated:

\begin{equation}
\mat{\Omega} = \displaystyle\frac{1}{n} \sum_{i=1}^n \myvec{x}^{(i)}(\myvec{z}^{(i)})^{\top}.
\label{eq:omega}
\end{equation}

SVD is then performed on $\mat{\Omega}$ to obtain $(\mat{U}, \mat{\Sigma}, \mat{V})$. We choose an integer value $k$ and define $\overline{\mat{U}} = \mat{U}_{(k+1):d}$. The value of $k$ is bounded by the rank of $\mat{\Omega}$. The rank of $\mat{\Omega}$ is bounded from above by $d$ and $d'$, the dimensions of the vectors of $\myvec{X}$ and $\myvec{Z}$.

Then, the vectors $\myvec{x}^{(i)}$ are projected using either $\overline{\mat{U}}^{\top}$ or $\overline{\mat{U}}\overline{\mat{U}}^{\top}$. The latter projection attempts to project $\myvec{x}^{(i)}$ to the original dimensionality and space after removing the information. More specifically, $\overline{\mat{U}}\overline{\mat{U}}^{\top}$ is a projection matrix to the range of $\mymat{\Omega}$. 

The criterion we use to choose $k$ is based on the singular values in $\mat{\Sigma}$. More specifically, we choose a threshold $\alpha \ge 1$ and choose the minimal $k$ such that $\mat{\Sigma}_{11} / \mat{\Sigma}_{k+1,k+1} > \alpha$.


\subsection{Connection to CCA and PCA}
\label{app:connections}
\label{sec:conn}

We describe connections to other matrix factorization methods.

\paragraph{How is SAL related to Canonical Correlation Analysis?} The use of SVD on the cross-covariance matrix is very much related to the technique of Canonical Correlation Analysis (CCA), in which projections of $\rv{X}$ and $\rv{Z}$ are found such that they maximize the cross-correlation between these two random vectors. Rather than applying SVD on the cross-correlation matrix (CCA), we apply it on the cross-covariance matrix to preserve the $\rv{X}$ scale in our projection.

\paragraph{How is SAL related to Principal Component Analysis?} The use of SVD on the  cross-covariance matrix is  reminiscent of Principal Component Analysis (PCA), in  which eigenvalue decomposition is applied on $\mathbb{E}[\rv{X}\rv{X}^{\top}]$ to reduce the dimensionality of $\rv{X}$. However, PCA does not reduce  the  dimensionality of $\rv{X}$ while removing information present in the guarded r.v. $\rv{Z}$. Rather, it finds a $\rv{X}$ projection in which the covariance of a linear combination of $\rv{X}$ with itself is maximized.

In all three cases of CCA, PCA and in addition, LSA (Latent Semantic Analysis; \citealt{dumais2004latent}), SVD or eigenvalue decomposition is used with the aim of \emph{maximizing} the correlation or covariance between one or two random vectors. In our case, the SVD is used to \emph{minimize} the covariance between projections of $\rv{X}$ and $\rv{Z}$.

\ignore{
\subsection{Non-negativity of Singular Values}
\label{section:practical-basis}


In the previous sections, we assumed that $\sigma_i > 0$ for $i \in [d']$. It could be the case that $\sigma_i \approx 0$, either because the rank of $\mat{A}$ is smaller than $d'$ (i.e. there are less than $d'$ directions on which $\rv{X}$ and $\rv{Z}$ co-vary) or because of numerical issues with running SVD. This would lead to issues with identifiability --- \shaycomment{this is not the right place for this section, I am also not sure it is necessary}
}




\section{Kernel Extension to SAL}
To enrich the type of information that is detected as co-varying, it is possible to use two feature functions, $\phi \colon \mathbb{R}^d \rightarrow \mathbb{R}^m$ and $\psi \colon \mathbb{R}^{d'} \rightarrow \mathbb{R}^{m'}$, and apply the procedure in \S\ref{section:vanilla-svd} on $\mathbb{E}[\phi(\rv{X})(\psi(\rv{Z}))^{\top}]$. In that case, we can erase the information from $\phi(\rv{X})$ and treat it as the input for further classification. If the classifier is already learned, it would have to take input vectors of the form $\phi(\rv{X})$, otherwise, it can be re-trained with the erased inputs.

\subsection{The Kernel Trick}

The kernel trick refers to learning and prediction without explicitly representing $\phi(\myvec{x})$ or $\psi(\myvec{z})$. Rather than that, we assume two kernel functions, $\myfunc{K}_{\phi}(\myvec{x},\myvec{x'})$ and $\myfunc{K}_{\psi}(\myvec{z},\myvec{z}')$ that calculate similarities between two $x$s or between two $z$s. 

Every kernel that satisfies the necessary properties can be shown to be a dot product in some feature space. This means that for a given kernel function $\myfunc{K}_{\phi}(\myvec{x},\myvec{x'})$ it holds that

\begin{equation}
\myfunc{K}_{\phi}(\myvec{x},\myvec{x'}) = \langle \phi(x), \phi(x') \rangle, \label{eq:kernel}
\end{equation}

\noindent for some $\phi$ function and similarly for $\myfunc{K}_{\psi}(\myvec{z},\myvec{z}')$. Masking learning and prediction through a kernel function is often useful when the feature representations $\phi$ and $\psi$ are hard to explicitly compute, for example, because $m = \infty$ or $m' = \infty$ (such as the case with the Radial Basis Function, RBF, kernel).

We show next that the kernel trick can be used to generalize SAL to nonlinear information removal. 

\subsection{Removal with the Kernel Trick}
\label{section:kernel}

Rather than assuming a set of examples in the form mentioned in \S\ref{section:problem}, we assume we are given as input two kernel matrices of dimension $n \times n$:

\begin{align}
[\mat{K}_{\phi}]_{ij} = \myfunc{K}_{\phi}(\myvec{x}^{(i)}, \myvec{x}^{(j)}), \\
[\mat{K}_{\psi}]_{ij} = \myfunc{K}_{\psi}(\myvec{z}^{(i)}, \myvec{z}^{(j)}).
\end{align}

In addition, for the justification of our algorithm, we define the following two feature matrices based on the kernel feature functions:

\begin{align}
\forall i \in [m], j \in [n] & \, & [\mat{\Phi}]_{ij} = \phi(\myvec{x}^{(j)})_i, \\
\forall i \in [m'], j \in [n] &\, & [\mat{\Psi}]_{ij} = \psi(\myvec{x}^{(j)})_i.
\end{align}

Note that these two matrices are never calculated explicitly. Given the definition of the kernel as a dot product in the feature space (Eq.~\refeq{eq:kernel}), it can be shown that $\mat{K}_{\phi} = \mat{\Phi}^{\top} \mat{\Phi}$ and $\mat{K}_{\psi} = \mat{\Psi}^{\top}\mat{\Psi}$. In addition, we slightly change the empirical cross-covariance matrix $\mat{\Omega}$ definition in Eq.~\refeq{eq:omega} to:
$\mat{\Omega} =  \mat{\Phi}\mat{\Psi}^{\top}$. %
(This means we ignore the constant $1/n$ in the above definition of $\mat{\Omega}$, the constant that normalizes the matrix with respect to the number of examples. This does not change the nature of the following discussion, but it makes it simpler.) At this point, the question is how to perform SVD on $\mat{\Omega}$ without ever accessing directly the feature functions. This is where the spectral theory of matrices comes in handy.

More specifically, it is known that the left singular vectors of $\mat{\Omega}$ ($\mat{U}$) are the eigenvectors of $\mat{\Omega}\mat{\Omega}^{\top}$. In addition, the singular values of $\mat{\Omega}$ correspond to the square-root values of the eigenvalues of $\mat{\Omega}\mat{\Omega}^{\top}$.

In addition, we show in Appendix~\ref{appendix:A} why an eigenvector $\myvec{w}$ of $\mat{\Gamma} = \mat{K}_{\phi} \mat{K}_{\psi}$ can be transformed to an eigenvector of $\mat{\Omega}\mat{\Omega}^{\top}$ by multiplying $\myvec{w}$ on the left by $\mat{\Phi}$ and calculating $\mat{\Phi} \myvec{w}$.

With this fact in mind, we are now ready to find the left singular vectors of $\mat{\Omega}$ by finding the eigenvalues of $\mat{\Gamma}$, a matrix which is solely based on the kernel functions of $\myvec{x}$ and $\myvec{z}$.

Let $\myvec{w}_1, \ldots, \myvec{w}_k$ be eigenvectors of $\mat{\Gamma}$ and let $\myvec{w}'_1,\ldots,\myvec{w}'_k$ be
the orthonormalization of $\myvec{w}_i$, $i \in [k]$ based on the inner product $\langle \myvec{w}_i, \myvec{w}_j \rangle = \myvec{w}_i \mat{K}_{\phi} \myvec{w_j}^{\top}$. If we denote by $\mat{W}$ the matrix such that $\mat{W}_j = \myvec{w}'_j$ for $j \in [k]$, then $\Phi \mat{W} = \mat{U}$ where $\mat{U}$ is the left singular vector matrix of $\mat{\Omega}$. Then,

\begin{align}
    \mat{U}^{\top} \phi(\myvec{x}) & = (  \mat{W}^{\top} \mat{\Phi}^{\top}) \phi(\myvec{x}) = \mat{W}^{\top} \myfunc{\kappa}(\myvec{x}), \label{eq:kernel-proj}
\end{align}

\noindent where $\myfunc{\kappa}(\myvec{x})$ is a function that returns a vector of length $n$ such that $[\myfunc{\kappa}(\myvec{x})]_j = \myfunc{K}(\myvec{x}^{(j)}, \myvec{x})$. Eq.~\refeq{eq:kernel-proj} shows we can calculate the projection of $\phi(\myvec{x})$ while removing the information in $\psi(\myvec{z})$ by using the smallest eigenvalue eigenvectors of $\mat{\Gamma}$ and kernel calculations of each training example with $\myvec{x}$.

\subsection{Practical Kernel Removal}
\label{section:practical}

Using the kernel algorithm as above may lead to issues with tractability, as it possibly requires calculating the full eigenvector matrix of a large matrix (the product of two kernel matrices). We propose an alternative algorithm (kSAL) for the kernel case, which is more tractable.

For a fixed $0 \le k \le n$ (which does not need to be larger than the rank of either kernel matrices), we compute only the top $k$ eigenvectors of $\mat{\Gamma}$. We then compute an orthonormal basis for the null space of the matrix $(\mat{K}_{\phi,\myhalf{}} \mat{W}_{1:k})^{\top}$ where 
%
    $\mat{K}_{\phi,\myhalf{}} = \mat{U}_{\phi} \mat{\Sigma}_{\phi}^{1/2}\mat{V}_{\phi}^{\top}$,
%
with $(\mat{U}_{\phi}, \mat{\Sigma}_{\phi}, \mat{V}_{\phi})$ being the SVD of $\mat{K}_{\phi}$. Practically, this means we find a matrix $\mat{L}_{\phi} \in \mathbb{R}^{n \times (n-d)}$ such that $\mat{L}_{\phi}^{\top}\mat{L}_{\phi}=I$ and that $||(\mat{K}_{\phi,\myhalf{}} \mat{\Gamma})^{\top} \mat{L}_{\phi} ||_2 \approx 0$. The final data points $\hat{\myvec{x}}^{(j)}$ we use further down the pipeline correspond to the rows of $\mat{K}_{\phi,\myhalf{}} \mat{L}_{\phi} \in \mathbb{R}^{n \times (n-k)}$. If we are interested in using directly the reduced kernel matrix for the input vectors, we can use

\begin{equation}
    \hat{\mat{K}}_{\phi} = \mat{K}_{\phi,\myhalf{}} \mat{L}_{\phi} \mat{L}_{\phi}^{\top} \mat{K}_{\phi,\myhalf{}}^{\top}. \label{eq:khat}
\end{equation}

\paragraph{Time Complexity} Absorbing the kernel function computation as a constant, computing the kernel matrices is $\mathcal{O}(n^2)$ and their product $\mat{\Gamma}$ in $O(n^{\omega})$ for $\omega < 2.808$ using Strassen's algorithm, but can be done much more efficiently when $\mat{K}_{\psi}$ is sparse, as normally expected. Calculating the top $k$ eigenvectors of $\mat{\Gamma}$, has a cost of $\mathcal{O}(nk^2 + k^3)$ using, for example, the Arnoldi method.\footnote{For example, Matlab implements a variant of the Arnoldi method for its function \texttt{eigs}.} In \S\ref{section:kernel-exp2}, we report the clock running time for the kernel method.

Below, we experiment with RBF kernels (where $K_{\phi}(\myvec{x},\myvec{x}') = \exp(-\gamma||\myvec{x}-\myvec{x}'||_2^2)$; we use $\gamma = 0.1$) and polynomial kernel of degree 2 (where $K_{\phi}(\myvec{x},\myvec{x}') = (1 + \myvec{x}^{\top}\myvec{x}')^2$). The $\myvec{z}$ kernel remains linear (dot product).









\ignore{

\paragraph{Word Embeddings Debiasing}

\paragraph{Kernels} We experiment with a variety of kernel functions, more specifically with RBF kernels (where $K_{\phi}(\myvec{x},\myvec{x}') = \exp(-\gamma||\myvec{x}-\myvec{x}'||_2^2)$; we use $\gamma = 0.1$) and polynomial kernel of degree 2 (where $K_{\phi}(\myvec{x},\myvec{x}') = (1 + \myvec{x}^{\top}\myvec{x}')^2$). Linear kernel refers to the standard dot product. The kernel of $\myvec{z}$ remains linear in all of our experiments.
}

\section{Experiments}
\label{section:exp}

In our experiments, our main comparison algorithm is the iterative null space projection (INLP) algorithm of \newcite{ravfogel2020null}, which aims at solving an equivalent problem to ours.
For the word embedding debiasing and fair classification (both setups), we follow the experimental settings of \newcite{ravfogel2020null}.\footnote{We use the authors' implementation for both the INLP method and the experimental settings: \url{https://github.com/shauli-ravfogel/nullspace_projection}.} 
SAL provides linear guarding, similarly to INLP, while kSAL also captures nonlinear regularities with respect to $\myvec{Z}$ (one-hot vector). We can provide such guarding for representations of state-of-the-art encoders (such as BERT), provided the representations are eventually fed into a classifier for prediction. The protected attributes we experiment with are \emph{gender} and \emph{race}. 

\ignore{
\shuncomment{The word embedding is always the first step in debiasing while the gender biases is one of most salient problems in the fields. However, the KNN does not quantify the changes in the embeddings and the semantic evaluation on word similarity and relatedness is still far from a real word application. Therefore, it is very practical to test the model on a fair classification dataset which contains both the main tasks label and biases label to balance the loss in the model performance and biases removal. We followed \citet{blodgett2016demographic} on race-sentiment-emoji (biases-main task-representation) fair classification,. However, the main tasks only has two labels and we find it is easy to get high accuracy on the sentiment prediction by our method. We used another more realistic data which has more complicated main tasks on profession prediction, this is the gender-profession-biography (biases-main task-representation). The dataset include 28 professions in total.}
}

\paragraph{Datasets}
For debiasing word embeddings (\S\ref{section:we}), we use 7,500 male and female associated words, 15K words overall. The dataset train/validation/test split sizes are (49\%/21\%/30\%). All the splits are balanced, i.e., containing an equal amount of male and female associated words. For the fair sentiment classification task (\S\ref{sec:fair-sentiment}), we use 10K training examples across all authors' ethnicity ratios (0.5, 0.6, 0.7, and 0.8). All training sets have an equal amount of positive and negative sentiment examples. The test set is balanced for both sentiment and authors' ethnicity labels. For the profession classification task (\S\ref{section:biography}), the data train/validation/test split sizes are (65\%/10\%/25\%), and all the splits combined contain 115K samples.

\subsection{Word Embedding Debiasing}
\label{section:we}

Word embeddings are often prone to encoding biases in various ways (see \S\ref{sec:related_work}). We evaluate our methods on gender bias removal from GloVe word embeddings. We use the 150,000 most common words and discard the rest. We sort the embeddings by their projection on the $\overrightarrow{\text{he}}$-$\overrightarrow{\text{she}}$ direction. Then we consider the top 7,500 word embeddings as male-associated words ($z=1$) and the bottom 7,500 as female-associated words ($z=-1$). 

\paragraph{Results with SAL}
\begin{table} [t]
\footnotesize
\centering
{
\begin{tabular}{|l|r|r|r|r|r cc}
\hhline{~----}
\multicolumn{1}{c|}{} &
\multicolumn{1}{|c|}{SL} & 
\multicolumn{1}{|c|}{WS-S} &
\multicolumn{1}{|c|}{WS-R} &
\multicolumn{1}{|c|}{Mturk} \\
\hline
Before   & 0.37 & 0.69 & 0.6 & 0.68 \\
After    & \uag{0.02} 0.39  & \uag{0.01}  0.7  &  0.6 & \uag{0.01} 0.69 \\
\hline
\end{tabular}
}
\caption{The semantic evaluation of word embeddings before and after removing gender bias.}
\label{tab:semantic_evaluation}
\end{table}

A linear classifier can perfectly predict the guarded gender attribute when trained on out-of-the-box GloVe embeddings. Removing the first direction ($k=1$) does not affect the accuracy demonstrated in Figure~\ref{fig:dir_removal}. For $k=2$, the performance drops to 50.2\%, almost a random guess.

We further perform intrinsic semantic tests to ensure the debiased embeddings remain useful. We use SimLex-999, WordSim353, and Mturk771 (similarity and relatedness datasets) to calculate the correlation between cosine similarities of the word embeddings to the human-annotated similarity score \cite{hill2015simlex, finkelstein2001placing, halawi2012large}.  We observed minor improvements for all tests when using debiased embeddings (Table~\ref{tab:semantic_evaluation}), suggesting that our method keeps the embeddings intact. We also report the three most similar words (nearest neighbors) for ten random words before and after SAL (see Appendix~\ref{appendix-b}). We observe almost no change between the two sets of embedding results.

SAL debiasing does not provide a nonlinear information removal. In Figure~\ref{fig:dir_removal} we plot the performance of nonlinear classifiers in the prediction of the \textbf{linearly-guarded} attribute (gender) as a function of the number of removed directions. We also provide linear classifier results for reference. We see that even after removing up to 30 principal directions, (linear) SAL is not sufficient for nonlinear classifiers -- the gender can still be predicted. This finding is also noted by \newcite{ravfogel2020null}, who did not offer a direct solution. This finding partially motivates our development of kSAL. 


\begin{figure}
\includegraphics[width=0.45\textwidth]{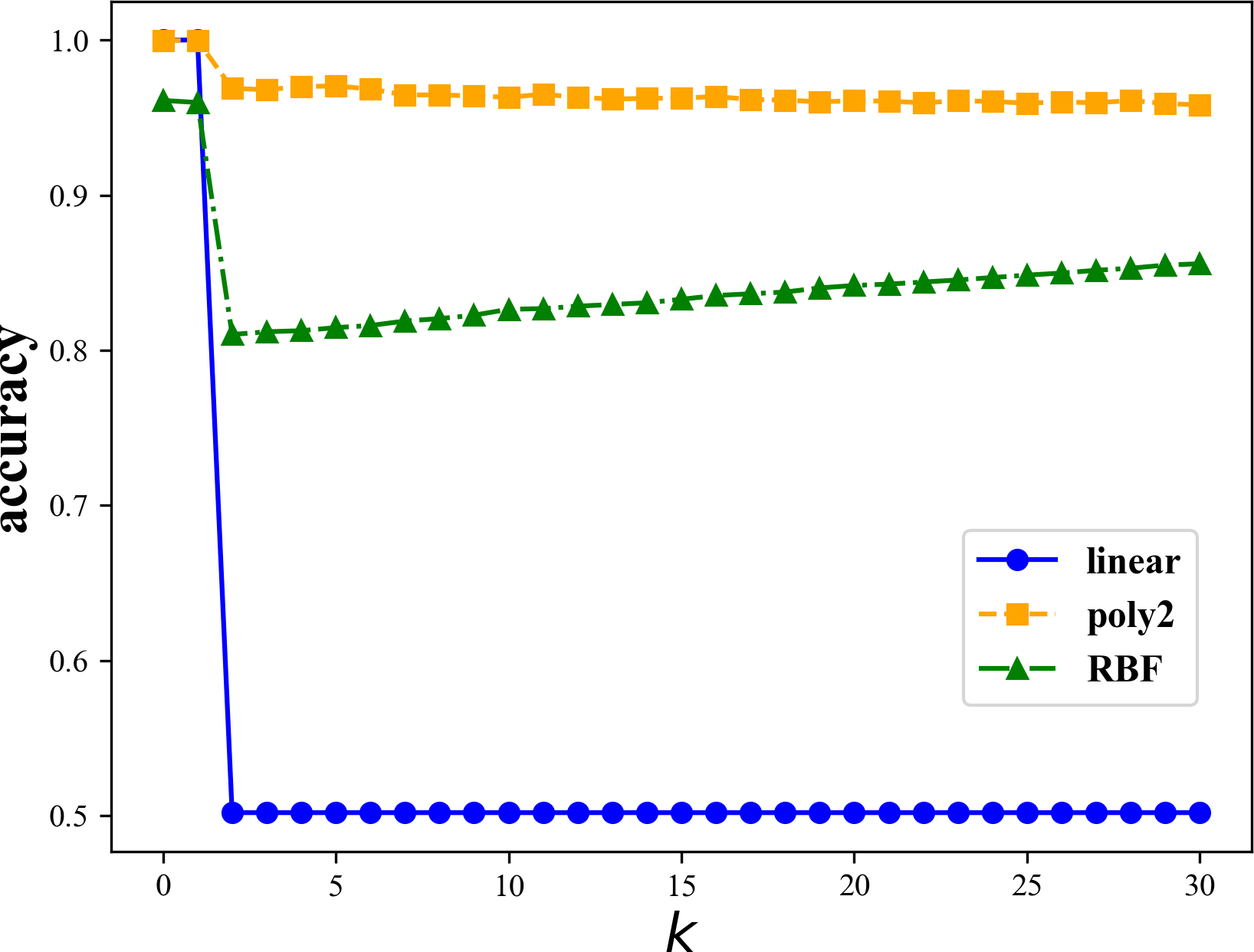}
\centering
\caption{A classifier accuracy for gender prediction as a function of the number of principal directions that are linearly removed. For the linear classifier, we use a linear SVM. For the nonlinear classifiers, we use SVM with the polynomial kernel and with the RBF kernel.}
\label{fig:dir_removal}
\end{figure}

\paragraph{Kernel Debiasing}
\label{section:kernel-exp}

All three kernels achieve high gender prediction accuracy when no information is removed ($k=0$), with accuracy of 100\%, 99.9\% and 95.7\% for the linear, polynomial, and RBF kernel, respectively. While the performance of the linear and polynomial kernels is not affected by removing one principal direction ($k=1$), the RBF kernel accuracy drops to 86.3\%. With $k=2$, performance drops to 50.2\%, 44.5\% and 50.2\% for the linear, polynomial, and RBF kernel, respectively, under \textbf{nonlinear kernel} removal. Compared to Figure~\ref{fig:dir_removal} with SAL, we see kSAL effectively removes nonlinear information. 


\paragraph{Deviations of Reduced Kernel from Original Kernel} To quantitatively test whether the embeddings retain their geometric form when removing gender information, we compare the standard deviation ($\rho$) of the values in $\mat{K}_{\phi}$ to the average deviation ($\gamma$) of values of $\mat{K}_{\phi}$ from the corresponding values in $\hat{\mat{K}}_{\phi}$ (Eq.~\refeq{eq:khat}).
When removing two principal directions, the largest approximation difference is seen in the linear kernel, with $\gamma / \rho = 0.64$. For the polynomial kernel, we observe $\gamma / \rho = 0.52$. For RBF, we have $\gamma / \rho = 0.16$. 

\subsection{Fair Classification}
\label{sec:fair-sentiment}
 To further evaluate our method on downstream tasks, we follow fair classification tests of social media text and other texts. 
 
\subsubsection{Fair Sentiment Analysis}
\begin{table*} [h!]
\scalebox{1.06}{
\small
\centering
{
\begin{tabular}{|c|r|r|r|r|r|r|r|r|c|c cc}
\hline
\multicolumn{1}{|c|}{} &
\multicolumn{4}{|c|}{Sentiment}&
\multicolumn{4}{|c|}{TPR-Gap}\\
\multicolumn{1}{|c|}{Rt} &
\multicolumn{1}{|c|}{Orig.} & \multicolumn{1}{|c|}{INLP} &
\multicolumn{1}{|c|}{SAL, $k=1$} & \multicolumn{1}{|c|}{SAL, $k=2$} &
\multicolumn{1}{|c|}{Orig.} & \multicolumn{1}{|c|}{INLP} &
\multicolumn{1}{|c|}{SAL, $k=1$} & \multicolumn{1}{|c|}{SAL, $k=2$}  \\
\hline
0.5    & 0.76 & 0.76  & 0.76 & 0.76 & 0.14 &\da{0.02} 0.12  &  0.14& \da{0.03} 0.11 \\
0.6    & 0.75 & 0.75 & 0.75 & 0.75 & 0.22 & \da{0.03} 0.19 & 0.22& \da{0.13} 0.09 \\
0.7    & 0.74 & 0.74 &  0.74 & 0.74 & 0.31 & \da{0.05} 0.26 & 0.31&   \da{0.15} 0.11 \\
0.8    &  0.72 & \dab{0.2} 0.52 &   0.72 &   0.72 &  0.40 & \da{0.39} 0.01 &\da{0.04} 0.36& \da{0.22} 0.18 \\
\hline
\end{tabular}
}}
\caption{The sentiment analysis scores (we use accuracy, as the dataset is balanced) and TPR differences (lower is better) as a function of the ratio of tweets (Rt) written by black individuals and
conveying positive sentiment. Arrows with numbers indicate absolute increase/decrease from the baseline, and their background color indicates a difference with positive implications (green) or negative ones (red).}
\label{tab:sen_fair}
\end{table*}

\paragraph{Task and Data}
The first task is sentiment analysis for social network users' posts. We use the TwitterAAE dataset \cite{blodgett2016demographic}, which contains users' tweets ($\myvec{x}$), coupled with the users' ethnic affiliations ($\myvec{z}$), and a binary label for the sentiment the tweet conveys ($\myvec{y}$). The dataset splits the users into two groups, African American English (AAE) speakers and Standard American English (SAE) speakers. As users' privacy makes it hard to obtain ground truth labels for ethnic affiliation, the dataset uses the demographics of the neighborhoods the users live in as a proxy. 
Following \newcite{ravfogel2020null}, we use the encoder of \newcite{felbo2017using}, DeepMoji, to obtain the tweets representation. DeepMoji is suitable for our goal, as it has been shown to encode demographic information and, therefore, might lead to unfair classification \cite{elazar2018adversarial}.

We experiment with four different setups. The dataset consists of an equal amount of positive and negative sentiment examples for all of them. The datasets differ with respect to the guarded attribute ratio. A ratio of $p \in \{ 0.5, 0.6, 0.7, 0.8 \}$ means that $p$ of the positive class examples are composed of AAE speakers, and $p$ of the negative class examples are composed of SAE speakers.  We experiment with ratios of $0.5$, $0.6$, $0.7$ and $0.8$. The larger the ratio, the higher the classifier's tendency to make use of protected attributes to make its prediction. 

\paragraph{Evaluation Measures}
We report the accuracy of the methods on the sentiment analysis task. To measure fairness, we use the difference in true positive rate (TPR-gap) between individuals belonging to different guarded attributes groups \cite{hardt2016equality,ravfogel2020null}. The rationale behind the TPR gap is that for an equal opportunity, a positive outcome must be independent of the guarded attribute ($\myvec{z}$), conditional on ($\myvec{y}$) being an actual positive. See \newcite{hardt2016equality} for more details.

\ignore{
\begin{equation}
\begin{split}
P(\hat{Y}=1|Z=z,Y=1)=\\
P(\hat{Y}=1|Z=\hat{z},Y=1)
\end{split}
\end{equation}
Our goal is hence to minimize the TPR-gap:
\begin{equation}
\begin{split}
P(\hat{Y}=1|Z=z,Y=1)-\\
P(\hat{Y}=1|Z=\hat{z},Y=1)=\\
TPR-gap_{z,y}
\end{split}
\end{equation}
}

\paragraph{Results}
Table~\ref{tab:sen_fair} presents our results for the fair sentiment classification. For the first three ratios, $0.5$, $0.6$, and $0.7$, we can see that both SAL ($k = 1, 2$) and INLP  maintain most of the main-task performance. In debiasing (TPR-Gap), SAL with $k=2$ significantly outperforms INLP. As expected, removing two directions results in better debiasing than removing one, but it does not lead to a performance drop on the main task. 
While for the last ratio, $0.8$, INLP achieves the highest TPR-gap result, it comes at the cost of a sharp performance drop on the main task, resulting in a nearly random classifier. SAL ($k=1,2$) maintains most of the main-task performance, and for $k=2$, the TPR-gap is halved.
\subsubsection{Fair Profession Classification}
\label{section:biography}
\begin{table*} [t]
\small
\centering
{
\begin{tabular}{|c|c|c|c|c|c|c|c|c|c|c cc}
\hline
\multicolumn{1}{|c|}{} &
\multicolumn{4}{|c|}{Accuracy (profession)}&
\multicolumn{4}{|c|}{TPR-Gap (RMS)}\\
\multicolumn{1}{|c|}{Encoder} &
\multicolumn{1}{|c|}{Orig.} & \multicolumn{1}{|c|}{INLP} &
\multicolumn{1}{|c|}{SAL, $k=1$} & \multicolumn{1}{|c|}{SAL, $k=2$} &
\multicolumn{1}{|c|}{Orig.} & \multicolumn{1}{|c|}{INLP} &
\multicolumn{1}{|c|}{SAL, $k=1$} & \multicolumn{1}{|c|}{SAL, $k=2$}  \\
\hline
FastText    & 0.75 & \dab{0.05} 0.71 & \uag{0.01} 0.76 & \uag{0.01} 0.76 & 0.20 & \da{0.11} 0.09 & \da{0.02} 0.18&  \da{0.08} 0.12 \\
BERT    & 0.8 & \dab{0.11} 0.69 & \dab{0.02} 0.78 & \dab{0.02} 0.78 & 0.21 & \da{0.15} 0.06 & \da{0.04} 0.17&  \da{0.12} 0.09 \\

\hline
\end{tabular}
}
\caption{The profession classification on the biographies dataset results. We report accuracy and TPR-RMS. The number of classes is 28.}
\label{tab:biasbios}
\end{table*}
\paragraph{Task and Data}
The second task is profession classification. 
\newcite{de2019bias} attempt to quantify the bias in automatic hiring systems and show that even for a simple task, predicting a candidate's profession based on a self-provided short biography, significant gaps result from the writer's gender. This might influence the open positions an automatic system will recommend to a candidate, thus favoring candidates from one gender over the other. 
We hence follow the setup of \newcite{de2019bias}, who experiment with professions classification ($\myvec{y}$), from short biographies ($\myvec{x}$), and gender as a guarded attribute ($\myvec{z}$). 
We use a multiclass classifier to predict the profession, as there are 28 profession classes. We experiment with two types of text representations, FastText \cite{joulin2016fasttext}, based on bag of word embeddings (BWE) and BERT \cite{devlin2018bert} encodings.

\paragraph{Evaluation Measures}
We report accuracy for the profession classification. For bias level measurement, we use a generalization of TPR-gap for multi-class, suggested by \newcite{de2019bias}, calculating the root mean square (RMS) of the TPR with respect to all classes.

\newcite{de2019bias} also provided evidence for a strong correlation between TPR-gap and existing gender imbalances in occupations, which may lead to unfair classification.
\paragraph{Results}
Table~\ref{tab:biasbios} presents the profession classification results. Similar to the sentiment analysis task, SAL ($k=1,2$) maintains most of the main-task performance, and for $k=2$, the two-direction removal, the TPR-gap is lower. When comparing SAL ($k=2$) to INLP, we observe a clear trade-off between maintaining the main task performance (SAL, $k=2$) and low TPR-gap scores (INLP).

\subsection{Scarce Protected Attribute Labels}
For many real-world applications, obtaining large amounts of labeled data for protected attributes can be costly, labor-intensive, and in some cases, infeasible due to an ever-increasing number of privacy regulations. In this analysis, we stress-test our algorithm by simulating a scenario in which only a limited amount of samples from the main task are coupled with the desired protected attribute labels. For this purpose, we replicate the fair sentiment classification experiments, but this time, feeding only a fraction of the annotated data to our debiasing method. The experiment is identical in terms of the main task, i.e., we use 100K samples for training the sentiment classifier. We experiment with different fractions of the debiasing data, i.e., 5\%, of the sentiment training data containing labels about the protected attribute. We hence feed  5,000 samples for debiasing. The subsets for debiasing are chosen randomly. We repeat each experiment 10 times with different subsets. Table~\ref{tab:sen_fair_low} presents our results. Using a small fraction of the data for debiasing did not significantly affect SAL's ($k=1,2$) main-task performance. INLP, on the other hand, suffers from a sharp performance decrease, resulting in a near-random sentiment classifier. SAL's ($k=1,2$) ability to debias the data is slightly worse than in the complete dataset setting but the resulting representations are still significantly less biased than the original ones. INLP achieves low TPR gaps, but it is hard to determine if this  is due to an accurate bias removal or a result of corrupting the representations.
\begin{table*} [t]
\centering
\scalebox{1.06}{
\small
{
\begin{tabular}{|c|c|c|r|r|c|c|r|r|c|c cc}
\hline
\multicolumn{1}{|c|}{} &
\multicolumn{4}{|c|}{Sentiment}&
\multicolumn{4}{|c|}{TPR-Gap}\\
\multicolumn{1}{|c|}{Ratio} &
\multicolumn{1}{|c|}{Orig.} & \multicolumn{1}{|c|}{INLP} &
 \multicolumn{1}{|c|}{SAL, $k=1$} &
 \multicolumn{1}{|c|}{SAL, $k=2$} &
\multicolumn{1}{|c|}{Orig.} & \multicolumn{1}{|c|}{INLP} &
\multicolumn{1}{|c|}{SAL, $k=1$} &
 \multicolumn{1}{|c|}{SAL, $k=2$}  \\
\hline

0.5    & 0.76 & \dab{0.19} 0.57 & 0.76 & 0.76 & 0.14 & \da {0.12} 0.02 & \ua {0.27} 0.41 & \da{0.03} 0.11 \\
0.6    & 0.75 & \dab{0.16} 0.59 & 0.75 & 0.75 & 0.22 & \da{0.19} 0.03 & \ua {0.01} 0.23 & \da{0.13} 0.09\\
0.7    & 0.74 & \dab{0.17} 0.57 & \dab{0.01} 0.73 & \dab{0.01} 0.73 & 0.31 & \da{0.26} 0.05 & 0.31 & \da{0.19} 0.12 \\
0.8    & 0.72 &  \dab{0.15} 0.57 & \dab{0.01} 0.71 & \dab{0.01} 0.71 & 0.40 & \da{0.32} 0.08 & \da{0.08} 0.34 & \da{0.27} 0.17 \\
\hline
\end{tabular}
}}
\caption{The sentiment analysis experiments, 100K samples are use to train the sentiment classifier, but only 5K examples are used for learning to remove bias. The test set is identical to the one used in \S\ref{section:exp}}
\label{tab:sen_fair_low}
\end{table*}
\subsection{Kernel Experiments}
\label{section:kernel-exp2}

\begin{table}[hbt!]
\centering
{
\begin{tabular}{|c|r|r|r|r|c|c|c|c|c|c cc}
\hline
\multicolumn{5}{|c|}{Sentiment Analysis (DeepMoji)}\\
\hline
\multicolumn{1}{|c|}{} &
\multicolumn{2}{|c|}{Main}&
\multicolumn{2}{|c|}{TPR-Gap}\\
\multicolumn{1}{|c|}{$k$} &
 \multicolumn{1}{|c|}{poly2} &
  \multicolumn{1}{|c|}{rbf} &
 \multicolumn{1}{|c|}{poly2} &
 \multicolumn{1}{|c|}{rbf}   \\
\hline
0    & 0.75 & 0.76 & 0.14 & 0.15 \\
1    & 0.75 & 0.76 & 0.14 & 0.15\\
2   & 0.75 & 0.76 & \da{0.01} 0.13 & \da{0.03} 0.12 \\
\hhline{|=|=|=|=|=|}
\multicolumn{5}{|c|}{Profession Classification (BERT)}\\
\hline
\multicolumn{1}{|c|}{} &
\multicolumn{2}{|c|}{Main}&
\multicolumn{2}{|c|}{TPR-Gap (RMS)}\\
\multicolumn{1}{|c|}{$k$} &
 \multicolumn{1}{|c|}{poly2} &
  \multicolumn{1}{|c|}{rbf} &
 \multicolumn{1}{|c|}{poly2} &
 \multicolumn{1}{|c|}{rbf}   \\
\hline
0    & 0.77  &  0.61 & 0.33 & 0.23 \\
1    & 0.77 & \uag{0.07} 0.68 & \da{0.05} 0.28 & \ua{0.11} 0.34\\
2    & 0.77 &  \uag{0.07} 0.68 & \da{0.08} 0.25 & \ua{0.08} 0.31 \\
\hline
\end{tabular}
}
\caption{Kernel results with kSAL for sentiment (for a ratio of 0.5) and profession classification.}
\label{tab:exp_kernel}
\end{table}
Despite their flexibility in modeling rich feature functions, kernels have been documented to be computationally intensive. Lack of computational resources prevented us from 
using the full sentiment and bios datasets for our kernel experiments, and instead, we use $15,000$ training examples and $7,998$ test set examples (the full test set) for the sentiment dataset and $15,000$ training examples and $5,000$ examples for the profession dataset. For training on the acquired $15,000$ training examples, we used one Intel Xeon E5-2407 CPU, running at 2.2 GHz, for approximately five hours (for a time complexity analysis, see \S\ref{section:practical}).

Table \ref{tab:exp_kernel} shows that using only a small subset of the data, kSAL-poly2 reduces the TPR gaps while maintaining almost identical performance to the original model on both the sentiment analysis and profession classification tasks. 
For the sentiment analysis task, kSAL-RBF slightly improves the main task results while reducing the TPR-gap (RMS). For the RBF profession classification task, the results are unexpected, with main task performance increasing as we remove principal directions. This could be due to the pruning of the rich, infinite feature space RBF kernel represents (we also observe significant overfitting with RBF).\footnote{With INLP, RBF-kernel SVM also obtains low-accuracy results.}

\subsection{Perturbed Inputs}
While the transformation through $\overline{\mymat{U}}\overline{\mymat{U}}^{\top}$ maps $\myvec{x}$ back into the original vector space (as a projection), it often turns out that it removes information in such a way that the \emph{original} classifier (trained on data without removal) can no longer be used with the inputs after removal. This issue exists not only with our algorithm, but also with INLP, and indeed, like us, \newcite{ravfogel2020null} re-trained their classifier after they created the cleaned projected inputs.

Ideally, we would want to remove information without necessarily having to retrain a classifier for the main task, as this is costly and perhaps unattainable. 
To test the effect of such an approach, we interpolated $\overline{\mymat{U}}\overline{\mymat{U}}^{\top}$ with the identity matrix, to eventually project $\myvec{x}$ using $\lambda \overline{\mymat{U}}\overline{\mymat{U}}^{\top} + (1-\lambda)\mymat{I}$ for $\lambda \in \{ 0, 0.1, \ldots, 1.0 \}$. This approach weakens the impact of the removal projection and retains some of the information in $\myvec{x}$. While an adversary can attack this approach,\footnote{Consider that the matrix $\lambda \overline{\mymat{U}}\overline{\mymat{U}}^{\top} + (1-\lambda)\mymat{I}$ could be invertible for $\lambda < 1$.}  it can mitigate the effects of privacy violations in cases where the service or software used with the modified representations cannot be retrained, especially if the service providers have no malicious intent.

Figure~\ref{fig:per_input} describes an ablation experiment, ranging $\lambda$ as above on the bios dataset. We see that as we increase the intensity of the use of the SAL projection (increasing $\lambda$), the accuracy of both gender prediction and profession prediction decrease when training the original classifier on the non-projected inputs. While the behavior is similar for the gender accuracy for both INLP and our method, the decrease for the profession prediction is much sharper for $\lambda > 0.4$ with INLP.

\begin{figure}
\includegraphics[width=0.45\textwidth]{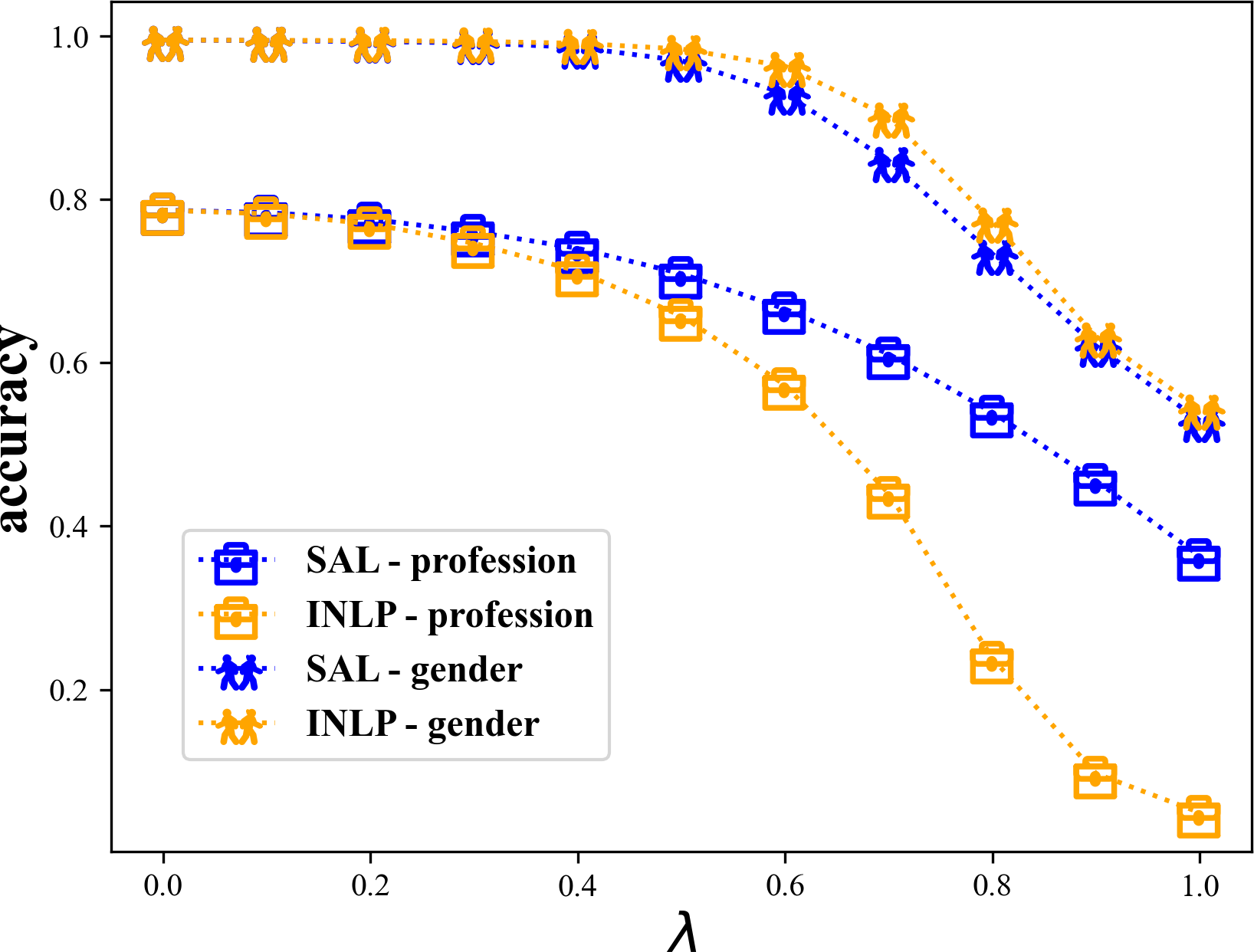}
\centering
\caption{Gender and profession classifications as a function of the interpolation coefficient $\lambda$.}
\label{fig:per_input}
\end{figure}

\begin{table} [t]
\scalebox{1.06}{
\small
\centering
{
\begin{tabular}{|c|l|l|l|l|}

\hline
\multicolumn{1}{|c|}{Task} &
\multicolumn{1}{|c|}{WED} & 
\multicolumn{1}{|c|}{FSC} &
\multicolumn{1}{|c|}{FPCF} & 
\multicolumn{1}{|c|}{FPCB} \\
\hline
SAL   & 0.03 sec & 0.37 sec & 0.16 sec & 0.35 sec  \\
INLP    & 50 sec & 100 min & 7 min & 35 min \\
\hline
\end{tabular}
}}
\caption{A run-time comparison between SAL and INLP. We used 2.20GHz Intel Xeon E5-2407 CPU for all of the experiments. WED, FSC, FSCF, and FPCB stand for word embedding debiasing, fair sentiment classification, and fair profession classification (with both FastText and BERT based representations).}
\label{tab:runtime}
\end{table}
\subsection{Runtime of SAL}
We measure the time it takes both methods to learn a projection matrix for a given training set. Once we have a projection matrix, debiasing the data is done by multiplying the data representation matrix by the learned projection matrix. Since matrix multiplication is a common practice for many research disciplines, and both methods use it, we do not benchmark it as well. Table \ref{tab:runtime} presents the run-time differences between SAL and INLP. For all of the experiments, SAL runtime is smaller by at least three orders of magnitude than INLP runtime.

\section{Related Work}
\label{sec:related_work}
\ignore{\paragraph{Definitions and Implications of Bias in ML Models}
The implications of bias in machine learning models are often studied from the perspective of \emph{harm of allocation}, i.e., an unfair allocation of resources resulting from biased decision-supporting models. As machine learning models are increasingly
 integrated into a wide range of critical decision-supporting systems, detecting unwanted bias in such models is crucial. In recent years, unwanted biases were detected in sensitive applications, such as a racial bias in predictive policing systems \cite{lum2016predict, kleinberg2018inherent}, predictive healthcare monitoring systems \cite{drew2014insights,lyell2017automation}, and gender bias in clinical trials \cite{agmon2021gender}, to name a few examples.
 
 \newcite{crawford2017the} proposed an additional perspective on bias, \emph{harms of representation}, i.e., how individuals can be poorly and unfairly represented in a feature space. Pre-trained representations, often trained on large corpora without human supervision, commonly serve as input to state-of-the-art NLP systems \cite{pennington2014glove,bojanowski2017enriching,devlin2018bert}. Recent work has demonstrated the bias in pre-trained models' representations, \cite{caliskan2017semantics, may2019measuring, zhao2019gender}, and in down-stream task classifiers utilizing them \cite{zhao2019gender,stanovsky2019evaluating,kiritchenko2018examining}.}
In their influential work, \newcite{bolukbasi2016man} revealed that  word embeddings for many gender-neutral terms show a gender bias.
\newcite{zhao2018learning} presented a customized training scheme for word embeddings, which minimizes the negative distances between words in the two groups, e.g., male and female related words, for gender debiasing.  \newcite{gonen2019lipstick} demonstrated that bias remains deeply intertwined in word embeddings even after using the above methods.
For example, they showed several methods that can accurately predict the gender associated with gender-neutral words, even after applying the methods mentioned above. 
Similar to \newcite{ethayarajh2019understanding}, they concluded that removing a small number of intuitively selected gender directions cannot guarantee the elimination of bias. Motivated by this conclusion, \newcite{ravfogel2020null} presented iterative null space projection (INLP). This debiasing algorithm iteratively projects features into a space where a linear classifier cannot predict the guarded attribute. The debiased representations are \emph{linearly guarded}, i.e., they cannot guarantee bias removal beyond the linear level. Indeed, they show a simple nonlinear classifier can achieve high accuracy when predicting the guarded attribute.  Their approach is also related to that of \newcite{xu2017cleaning}. Previous work uses adversarial methods \cite{ganin2016domain} for information removal \cite{edwards2015censoring,li2018towards,coavoux2018privacy,elazar2018adversarial,barrett2019adversarial,han2021diverse} with the one by \newcite{ravfogel2022linear} being related to ours through the use of the mini-max theorem with the squared-error loss on the reconstruction of a matrix similar to our covariance matrix. In addition, methods based on similarity measures between neural representations \cite{colombo2022learning} were developed. To support the increasing interest in fair classification, \newcite{han2022fairlib} presented an open-source framework for standardizing the evaluation of debiasing methods. Finally, most relevant to this paper is an extension of SAL to the unaligned case, where protected attributes are not paired with input examples \cite{shun2023erasure}.

\section{Conclusions}
We presented a method for removing information from learned representations. We extended our method by using kernels, showing we can provide an effective nonlinear guarding. We also experimented with real-world low-resource situations, in which only a small  guarded attribute dataset is provided for information removal.

\section*{Limitations}

There are two main technical limitations to our work: (a) while the kernel removal is nonlinear, it still depends on a feature representation that captures a specific type of nonlinearities; (b) like other kernel methods, the kernel removal method is significantly slower than direct SVD removal in cases where the feature representations can be written out without the need of an implicit kernel. Future work may apply random projections to the kernel matrices to decompose them more efficiently.

A general limitation of current information removal methods is that they can only remove information with respect to a specific class of classifiers. It could always be the case that complex correlations between the inputs and the guarded attributes exist, and that an adversary can try to exploit them to predict the guarded attribute if this class of classifiers is not too complex. Our use of kernels alleviates some of this issue, though not completely.

Finally, experimentally, we focus on text only in English. It is not clear to what extent our method generalizes to other languages in a useful manner, especially when morphology is rich, and the neural representations encode important information for the task at hand, but that information would be removed by our method.

\section*{Ethical Considerations}
Public trust plays a significant role in the broad applicability of NLP in real-world scenarios, especially in critical situations that may directly impact people's lives. NLP research of the kind presented in this paper helps this issue take the spotlight it deserves. However, we discourage NLP practitioners from using our method (and similar methods) as an out-of-the-shelf solution in deployed systems. We recommend investing a significant amount of time and effort in understanding the applicability and universality of our method to the debiasing of representations. Issues such as expected type of adversariality or tolerance level for drop in system performance need to be considered.


\section*{Acknowledgments}

We thank the anonymous reviewers for their helpful comments.
We especially appreciate the comment one of the reviewers provided regarding our title. Particularly, it could be misinterpreted as an indication of frustration at rejections of our paper (``gold'') in previous conferences. Rather, the ``gold'' in our case is the low-intensity principal vectors, which are pruned in most use cases of SVD.
We also thank Shauli Ravfogel for providing support with the code for INLP, Ryan Cotterell for discussions and Matt Grenander for feedback on earlier drafts.
The experiments in this paper were supported by compute grants from the Edinburgh Parallel Computing Center (Cirrus) and from the Baskerville Tier 2 HPC service (University of Birmingham).


\bibliography{anthology,custom}
\bibliographystyle{acl_natbib}

\appendix

\ignore{
\paragraph{How is this related to Latent Semantic Analysis?} In LSA, we perform SVD on a word-document co-occurrence matrix, aiming to project either one type of vectors into a space that corresponds to the largest singular vectors.
}

\section{Eigenvectors of $\mat{\Lambda}$}
\label{appendix:A}

We turn to the following Lemma used in \S\ref{section:kernel}.
\begin{lem}
Let $\myvec{w}$ be an eigenvector associated with eigenvalue $\lambda \in \mathbb{R}$ for $\mat{\Gamma} = \mat{K}_{\phi} \mat{K}_{\psi}$. Then $\mat{\Phi} \myvec{w}$ is an eigenvector of $\mat{\Omega}\mat{\Omega}^{\top}$.
\end{lem}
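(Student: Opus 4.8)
The plan is to prove the claim by direct substitution into $\mat{\Omega}\mat{\Omega}^{\top}$, expressing everything through the feature matrices. First I would rewrite the target operator purely in terms of $\mat{\Phi}$ and the $\myvec{z}$-kernel: since $\mat{\Omega} = \mat{\Phi}\mat{\Psi}^{\top}$ and $\mat{K}_{\psi} = \mat{\Psi}^{\top}\mat{\Psi}$, we have $\mat{\Omega}\mat{\Omega}^{\top} = \mat{\Phi}\mat{\Psi}^{\top}\mat{\Psi}\mat{\Phi}^{\top} = \mat{\Phi}\mat{K}_{\psi}\mat{\Phi}^{\top}$. Applying this to the candidate eigenvector $\mat{\Phi}\myvec{w}$ and using $\mat{\Phi}^{\top}\mat{\Phi} = \mat{K}_{\phi}$ gives $\mat{\Omega}\mat{\Omega}^{\top}(\mat{\Phi}\myvec{w}) = \mat{\Phi}\mat{K}_{\psi}\mat{\Phi}^{\top}\mat{\Phi}\myvec{w} = \mat{\Phi}\mat{K}_{\psi}\mat{K}_{\phi}\myvec{w}$. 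Thus the lemma reduces to establishing $\mat{\Phi}\mat{K}_{\psi}\mat{K}_{\phi}\myvec{w} = \lambda\,\mat{\Phi}\myvec{w}$ from the hypothesis $\mat{\Gamma}\myvec{w} = \mat{K}_{\phi}\mat{K}_{\psi}\myvec{w} = \lambda\myvec{w}$.

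Next I would try to discharge this equality by passing the discrepancy into the kernel of $\mat{\Phi}$. Here I would use the standard fact that $\ker(\mat{\Phi}) = \ker(\mat{\Phi}^{\top}\mat{\Phi}) = \ker(\mat{K}_{\phi})$, which follows because $\mat{\Phi}\myvec{v} = 0 \iff \|\mat{\Phi}\myvec{v}\|_2 = 0 \iff \myvec{v}^{\top}\mat{K}_{\phi}\myvec{v} = 0$. Consequently it suffices to show $(\mat{K}_{\psi}\mat{K}_{\phi} - \lambda\mat{I})\myvec{w} \in \ker(\mat{K}_{\phi})$, i.e. $\mat{K}_{\phi}(\mat{K}_{\psi}\mat{K}_{\phi} - \lambda\mat{I})\myvec{w} = 0$. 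Rewriting the left-hand side as $(\mat{K}_{\phi}\mat{K}_{\psi})\mat{K}_{\phi}\myvec{w} - \lambda\mat{K}_{\phi}\myvec{w} = (\mat{\Gamma} - \lambda\mat{I})(\mat{K}_{\phi}\myvec{w})$, the whole statement boils down to a single assertion: $\mat{K}_{\phi}\myvec{w}$ must lie in the $\lambda$-eigenspace of $\mat{\Gamma}$.

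The main obstacle is precisely this last step, and it is entirely a matter of the order of the kernel product. The operator that naturally acts on the dual coefficient vector is $\mat{K}_{\psi}\mat{K}_{\phi}$, which by symmetry of the Gram matrices equals $\mat{\Gamma}^{\top}$ rather than $\mat{\Gamma}$, whereas the hypothesis is phrased for a right eigenvector of $\mat{\Gamma}$. I would therefore scrutinize whether $\mat{K}_{\phi}\myvec{w}$ really stays in the $\lambda$-eigenspace of $\mat{\Gamma}$ (the condition that closes the reduction), and track carefully which dual vector should be pushed through $\mat{\Phi}$: note that $\mat{K}_{\psi}\myvec{w}$ is genuinely a $\lambda$-eigenvector of $\mat{\Gamma}^{\top} = \mat{K}_{\psi}\mat{K}_{\phi}$, since $\mat{K}_{\psi}\mat{K}_{\phi}(\mat{K}_{\psi}\myvec{w}) = \mat{K}_{\psi}(\mat{\Gamma}\myvec{w}) = \lambda\mat{K}_{\psi}\myvec{w}$. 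Restricting to $\lambda \neq 0$ and to the range of $\mat{K}_{\phi}$, and exploiting that $\mat{\Gamma}$ and $\mat{\Gamma}^{\top}$ share their nonzero spectrum, is the route I would pursue to tie the eigenvector of $\mat{\Gamma}$ furnished by the hypothesis to the eigenvector of $\mat{\Omega}\mat{\Omega}^{\top}$ produced by the map $\myvec{w} \mapsto \mat{\Phi}\myvec{w}$; getting this product-order bookkeeping exactly right is the only nontrivial part of the argument.
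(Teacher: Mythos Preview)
You have put your finger on the one substantive issue, and in fact you are being more careful than the paper. The paper's entire argument is two lines: from $\mat{\Gamma}\myvec{w}=\lambda\myvec{w}$ it writes
\[
\mat{\Psi}^{\top}\mat{\Psi}\,\mat{\Phi}^{\top}\mat{\Phi}\,\myvec{w}=\lambda\myvec{w},
\]
and then left-multiplies by $\mat{\Phi}$ to get $(\mat{\Phi}\mat{\Psi}^{\top}\mat{\Psi}\mat{\Phi}^{\top})\,\mat{\Phi}\myvec{w}=\lambda\,\mat{\Phi}\myvec{w}$, i.e.\ $\mat{\Omega}\mat{\Omega}^{\top}(\mat{\Phi}\myvec{w})=\lambda\,\mat{\Phi}\myvec{w}$. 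Notice, however, that the displayed first line is $\mat{K}_{\psi}\mat{K}_{\phi}\myvec{w}=\lambda\myvec{w}$, which is $\mat{\Gamma}^{\top}\myvec{w}=\lambda\myvec{w}$, \emph{not} $\mat{\Gamma}\myvec{w}=\lambda\myvec{w}$ under the stated definition $\mat{\Gamma}=\mat{K}_{\phi}\mat{K}_{\psi}$. So the paper's proof silently swaps the product order; with the swap the argument is immediate, and your elaborate reduction through $\ker(\mat{\Phi})=\ker(\mat{K}_{\phi})$ is unnecessary.

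Your instinct that something is off with the order is correct, and the route you sketch to repair it cannot succeed, because the lemma as literally stated is false. Take $n=2$ with
\[
\mat{K}_{\phi}=\begin{pmatrix}1&0\\0&2\end{pmatrix},\qquad
\mat{K}_{\psi}=\begin{pmatrix}1&1\\1&1\end{pmatrix},\qquad
\mat{\Phi}=\begin{pmatrix}1&0\\0&\sqrt{2}\end{pmatrix},\qquad
\mat{\Psi}=\begin{pmatrix}1&1\end{pmatrix}.
\]
Then $\mat{\Gamma}=\mat{K}_{\phi}\mat{K}_{\psi}$ has eigenvector $\myvec{w}=(1,2)^{\top}$ for $\lambda=3$, but $\mat{\Phi}\myvec{w}=(1,2\sqrt{2})^{\top}$ is sent by $\mat{\Omega}\mat{\Omega}^{\top}=\bigl(\begin{smallmatrix}1&\sqrt{2}\\\sqrt{2}&2\end{smallmatrix}\bigr)$ to $(5,5\sqrt{2})^{\top}$, which is not a scalar multiple of $(1,2\sqrt{2})^{\top}$. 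By contrast, the $\lambda=3$ eigenvector of $\mat{K}_{\psi}\mat{K}_{\phi}$ is $(1,1)^{\top}$, and $\mat{\Phi}(1,1)^{\top}=(1,\sqrt{2})^{\top}$ \emph{is} an eigenvector of $\mat{\Omega}\mat{\Omega}^{\top}$. The correct statement is therefore for eigenvectors of $\mat{K}_{\psi}\mat{K}_{\phi}$ (equivalently, left eigenvectors of $\mat{\Gamma}$), and with that fix the proof is the one-line left-multiplication by $\mat{\Phi}$ that the paper gives. Your observation that $\mat{K}_{\psi}\myvec{w}$ is an eigenvector of $\mat{\Gamma}^{\top}$ is the right bridge if one insists on starting from a right eigenvector of $\mat{\Gamma}$: the correct image in feature space is then $\mat{\Phi}\mat{K}_{\psi}\myvec{w}$, not $\mat{\Phi}\myvec{w}$.
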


\begin{proof}

Since $\myvec{w}$ is an eigenvector of $\mat{\Gamma}$, it holds that $\Gamma \myvec{w} = \lambda \myvec{w}$. Therefore:

\begin{align}
\mat{\Psi}^{\top}\mat{\Psi}\mat{\Phi}^{\top}\mat{\Phi}
\myvec{w} & = \lambda \myvec{w}, \\
\left( \mat{\Phi} \mat{\Psi}^{\top}\mat{\Psi}\mat{\Phi}^{\top} \right) \mat{\Phi} \myvec{w} & = \lambda \mat{\Phi} \myvec{w},
\end{align}

and therefore $\mat{\Phi}\myvec{w}$ is an eigenvalue of 

\begin{equation}
    \mat{\Omega}\mat{\Omega}^{\top} = \mat{\Phi}\mat{\Psi}^{\top}\mat{\Psi}\mat{\Phi}^{\top}.
\end{equation}

\end{proof}

\section{Nearest Neighbors Test for Word Embedding Debiasing}
\label{sec:knn}
\label{appendix-b}

We give in Table~\ref{tab:KNN_GloVe} the ten nearest neighbor words for ten random words from the data, before and after using SAL. The neighboring words are determined through cosine similarity of the corresponding embeddings with respect to the pivot word embedding. We observe little to no difference in these two lists (before and after the removal).

\begin{table*} [hbt]

\centering
{
\begin{tabular}{|l|l|l|l|l|l|l|l|l|l|l cc}
\hline
\multicolumn{1}{|c|}{Words} &
\multicolumn{1}{|c|}{Nearest neighbors (before)}&
\multicolumn{1}{|c|}{Nearest neighbors (after)}\\

\hline
lobbying & lobbyists, lobbyist, campaigning &  lobbyists, lobbyist, campaigning\\
once  & again, then, when & again, then, when \\
parliament & parliamentary, mps, elections & parliamentary, mps, elections \\
dashboard  & dashboards, smf, powered & dashboards, smf, powered \\
cumulative  & gpa, accumulative, aggregate & gpa, accumulative, aggregate \\
foam  & rubber, mattress, polyurethane & rubber, mattress, polyurethane \\
rh  & lh, bl, r & lh, bl, graphite \\
genetically & gmo, gmos, genetic & gmo, gmos, genetic \\
inner & outer, inside, innermost & outer, inside, innermost \\
harvest & harvesting, harvests, harvested & harvesting, harvests, harvested \\
secretary & deputy, minister, treasurer & deputy, minister, secretaries\\
\hline
ruth & helen, esther, margaret & helen, esther margaret \\
charlotte & raleigh, nc, atlanta & raleigh, nc, atlanta \\
abigail & hannah, lydia, eliza & hannah, lydia, samuel \\
sophie & julia, marie, lucy & julia, lucy, claire \\
nichole & nicole, kimberly, kayla & nicole, kimberly, mya \\
emma & emily, lucy, sarah & emily, watson, sarah \\
\hline
david & stephen, richard, michael & alan, stephen, richard \\
richard & robert, william, david & robert, william, david \\
joseph & francis, charles, thomas & mary, francis, charles \\
thomas & james, william, john & james, william, henry \\
james & john, william, thomas & william, john, thomas \\
\hline
\end{tabular}
}
\caption{Nearest neighbor test on GloVe word embeddings before and after debiasing on gender. The upper block includes a random set of words, while the middle and bottom block include female and male names.}
\label{tab:KNN_GloVe}
\end{table*}

\ignore{
\section{Experimental Details}
\ignore{
\shuncomment{
I think it will be better if we can use the same name across the paper, for example, line 798 and line 810 mentioned about the same experiment 'fair sentiment classification'. Also, it may be better to include the experiment name of BERT and FastText representation to avoid any possible confusion with the fair setniment test, it would be 'fair professions classification'.

The size and split of the datasets are all correct. But just in case if people are interested which dataset we used to capture the biased subspaces, it is the training set only. 

We find the most 7500 male and 7500 female associated words among 150K large dictionary by the projection on gender direction of [he-she].(line 793 - 796) The exact number of train/validation/test on fair professions classification(both FastText and BERT) is 74882, 11550 and 28842.(line 804 - 807).

We measured the run time on calculating the debiasing projection by doing svd or inlp on training set.(line 808 SAL's runtime subsection)

}}

\paragraph{Datasets}
For debiasing word embeddings, we use 7,500 male and female associated words, 15K words overall. The data train/validation/test split sizes are (49\%/21\%/30\%). All the splits are balanced, i.e., containing an equal amount of male and female associated words. For the fair Sentiment classification task, we use 10K training examples across all authors' ethnicity ratios (0.5, 0.6, 0.7, and 0.8). All training sets have an equal amount of positive and negative sentiment examples. The test set is balanced for both sentiment and authors' ethnicity labels. For the profession classification task, the data train/validation/test split sizes are (65\%/10\%/25\%) and all the splits combined contain 115K samples.
}

\ignore{
\begin{table}[t]
\small
\centering
{
\begin{tabular}{|c|c|c|c|c|}

\hline
\multicolumn{1}{|c|}{Task} &
\multicolumn{1}{|c|}{WED} & 
\multicolumn{1}{|c|}{FSC} &
\multicolumn{1}{|c|}{FPCF} & 
\multicolumn{1}{|c|}{FPCB} \\
\hline
SAL   & 0.03 sec & 0.37 sec & 0.16 sec & 0.35 sec  \\
INLP    & 50 sec & 100 min & 7 min & 35 min \\
\hline
\end{tabular}
}
\caption{A runtime comparison between SAL and INLP. We used 2.20GHz Intel Xeon E5-2407 CPU for all of the experiments. WED, FSC, FSCF, and FPCB stand for Word embedding debiasing, Fair sentiment classification, and fair profession classification (with both FastText and BERT based representations).}
\label{tab:runtime}
\end{table}
}

\ignore{
Word embeddings: size 7350, took 0.03 seconds, INLP took more than 50 minutes.
Fair classification size 100000, took 0.37 seconds, INLP took more than 1hour40minutes on the same machine.
BERT size 74882, took 0.35 seconds; INLP took more than 35 minutes on the same machine.
FastText size 74882, took 0.16 seconds; INLP took more than 7 minutes on the same machine.

\yftah{refer the reader to this section and fix line numbers in the cover letter}
}

\end{document}